\documentclass[letterpaper, 10 pt, conference]{ieeeconf} 
\pdfoutput=1
\usepackage{geometry}        
\usepackage{graphicx}
\usepackage{amsmath}
\usepackage{caption}
\usepackage{subcaption}
\IEEEoverridecommandlockouts                              
\newtheorem{theorem}{Theorem}

\title{\LARGE \bf Baidu Apollo EM Motion Planner}
\author{Haoyang Fan\(^{1,\dag}\), Fan Zhu\(^{2,\dag}\), Changchun Liu, Liangliang Zhang, Li Zhuang, \\Dong Li, Weicheng Zhu, Jiangtao Hu, Hongye Li, Qi Kong\(^{3,*}\) 
\thanks{\(^{\dag}\) Authors who contributed equally in this manuscript}%
\thanks{\(^{*}\) Corresponding author for this manuscript}%
\thanks{\(^{\#}\) www.apollo.auto}%
\thanks{\(^{1}\) Haoyang Fan is with Baidu USA LLC, fanhaoyang01@baidu.com.}
\thanks{\(^{2}\) Fan Zhu is with Baidu USA LLC, fanzhu@baidu.com.}
\thanks{\(^{3}\) Qi Kong is with Baidu USA LLC, kongqi02@baidu.com.}
}

\begin{document}

\maketitle
\thispagestyle{empty}
\pagestyle{empty}

\begin{abstract}

In this manuscript, we introduce a real-time motion planning system based on the Baidu Apollo (open source) autonomous driving platform. 
The developed system aims to address the industrial level-4 motion planning problem while considering safety, comfort and scalability. 
The system covers multilane and single-lane autonomous driving in a hierarchical manner:
(1) The top layer of the system is a multilane strategy that handles lane-change scenarios by comparing lane-level trajectories computed in parallel.
(2) Inside the lane-level trajectory generator, it iteratively solves path and speed optimization based on a Frenet frame.
(3) For path and speed optimization, a combination of dynamic programming and spline-based quadratic programming is proposed to construct a scalable and easy-to-tune framework to handle traffic rules, obstacle decisions and smoothness simultaneously.
The planner is scalable to both highway and lower-speed city driving scenarios. We also demonstrate the algorithm through scenario illustrations and on-road test results. 

The system described in this manuscript has been deployed to dozens of Baidu Apollo autonomous driving vehicles since Apollo v1.5 was announced in September 2017. 
As of May 16th, 2018, the system has been tested under 3,380 hours and approximately 68,000 kilometers (42,253 miles) of closed-loop autonomous driving under various urban scenarios.

The algorithm described in this manuscript is available at 
\url{https://github.com/ApolloAuto/apollo/tree/master/modules/planning}.
\end{abstract}

\section{Introduction}

Autonomous driving research began in the 1980s and has significantly grown over the past ten years.
Autonomous driving aims to reduce road fatalities, increase traffic efficiency and provide convenient travel. However, autonomous driving is a challenging task that requires accurately sensing the environment, a deep understanding of vehicle intentions and safe driving under different scenarios. To address these difficulties, we constructed an Apollo open source autonomous driving platform. 
The flexible modularized architecture of the developed platform supports fully autonomous driving deployment \url{https://github.com/ApolloAuto/apollo}.

In the figure, the HD map module provides a high-definition map that can be accessed by every on-line module. Perception and localization modules provide the necessary dynamic environment information, which can be further used to predict future environment status in the prediction module. The motion planning module considers all information to generate a safe and smooth trajectory to feed into the vehicle control module. 

In motion planner, safety is always the top priority. 
We consider autonomous driving safety in, but not limited to, the following aspects: traffic regulations, range coverage, cycle time efficiency and emergency safety. All these aspects are critical. Traffic regulations are designed by governments for public transportation safety, and such regulations also apply to autonomous driving vehicles. An autonomous driving vehicle should follow traffic regulations at all times. For range coverage, we aim to provide a trajectory with at least an eight second or two hundred meter motion planning trajectory. The reason is to leave enough room to maintain safe driving within regular autonomous driving vehicle dynamics. The execution time of the motion planning algorithm is also important. 
In the case of an emergency, the system could react within 100 ms, compared with a 300 ms reaction time for a normal human driver. 
A safety emergency module is the last shell for protecting the safety of riders. 
For a level-4 motion planner, once upstream modules can no longer function normally, the safety module within the motion planner shall respond with an immediate emergency behavior and send warnings that interrupt humans. 
Moreover, the autonomous driving system shall have the ability to react to emergencies in a lower-level-like control module. Further safety design is beyond this manuscript's scope.

Fig. \ref{fig:apollo_structure} shows the architecture of the Apollo online modules. 
\begin{figure}[htbp!]
\begin{center}
\includegraphics[width = 0.5 \textwidth]{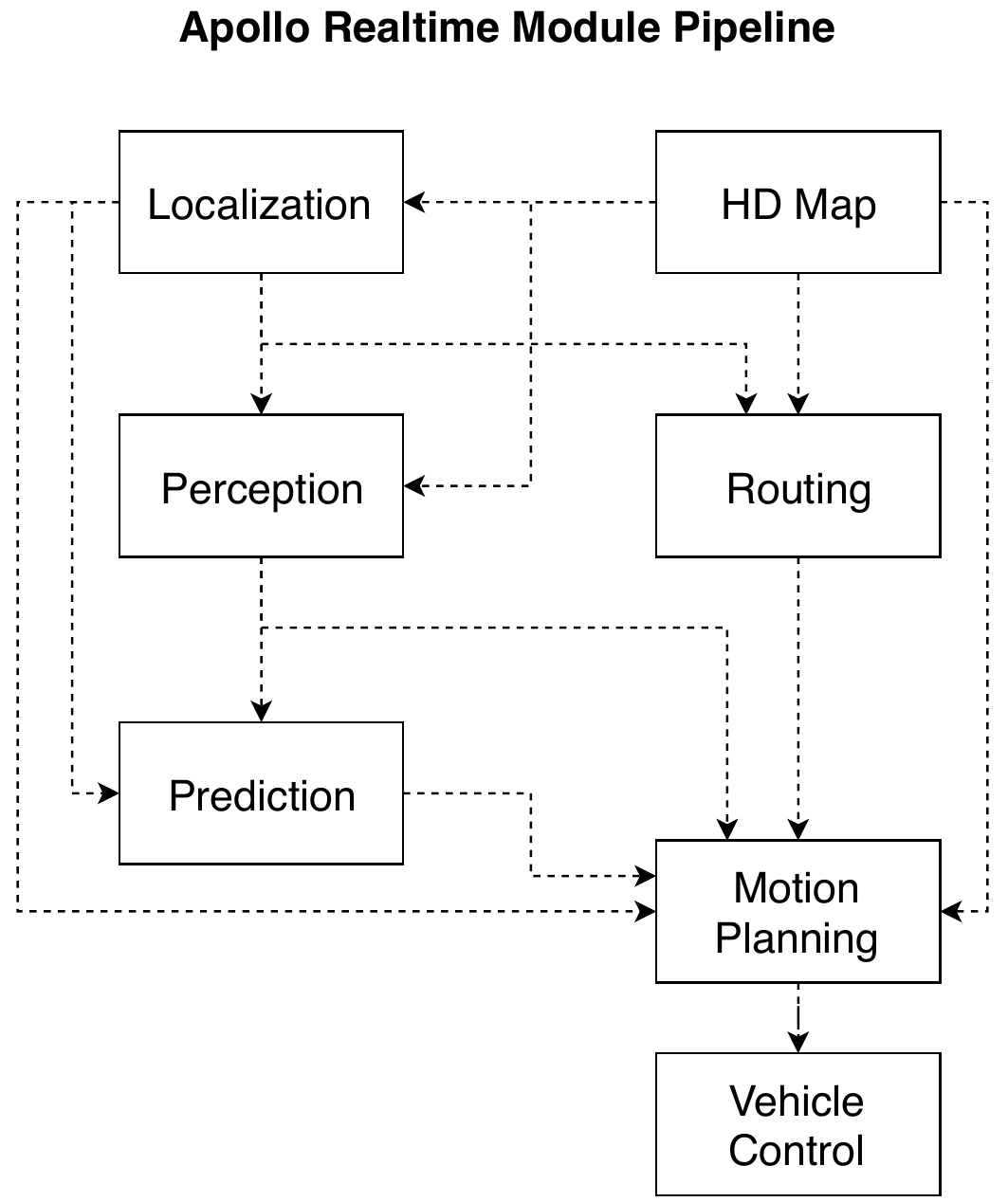}
\caption{On board modules of the Apollo open source autonomous driving platform}
\label{fig:apollo_structure}
\end{center}
\end{figure}

In addition to safety, passengers' ride experience is also important. The measurement of ride experience includes, but is not limited to, scenario coverage, traffic regulation and comfort. For scenario coverage, the motion planner should not only be able to handle simple driving scenarios (e.g., stop, nudge, yield and overtake) but also handle multilane driving, heavy traffic and other complicated on-road driving scenarios. 
Planning within traffic regulations is also important for ride experience. In addition to being a safety requirement, following the traffic regulations will also minimize the risk of accidents and reduce emergency reactions for autonomous driving. The comfort during autonomous driving is also important. In motion planning, comfort is generally measured by the smoothness of the provided autonomous driving trajectory.

This manuscript presents the Apollo EM planner, which is based on an EM-type iterative algorithm (\cite{dempster1977maximum}).
This planner targets safety and ride experience with a multilane, path-speed iterative, traffic rule and decision combined design. The intuitions for this planner are introduced as follows:

\subsection{Multilane Strategy}
For level-4 on-road autonomous driving motion planning, a lane-change strategy is necessary. 
One common approach is to develop a search algorithm with a cost functional on all possible lanes and then select a final trajectory from all the candidates with the lowest cost  \cite{ajanovic2018search}, \cite{werling2010optimal}.  
 This approach has some difficulties.
First, the search space is expanded across multiple lanes, which causes the algorithm to be computationally expensive.
Second, traffic regulations (e.g., right of the road, traffic lights) are different across lanes, and it is not easy to apply traffic regulations under the same frame. 
Furthermore, trajectory stability that avoids sudden changes between cycles should be taken into consideration. It is important to follow consistent on-road driving behavior to inform other drivers of the intention of the autonomous driving vehicle.

Typically, a multilane strategy should cover both nonpassive and passive lane-change scenarios. 
In EM planner, a nonpassive lane-change is a request triggered by the routing module for the purpose of reaching the final destination.
A passive lane change is defined as an ego car maneuver when the default lane is blocked by the dynamic environment. In both passive and nonpassive lane changes,  we aim to deliver a safe and smooth lane-change strategy with a high success rate. 
Thus, we propose a parallel framework to handle both passive and nonpassive lane changes. 
For candidate lanes, all obstacles and environment information are projected on lane-based Frenet frames. Then, the traffic regulations are bound with the given lane-level strategy.
Under this framework, each candidate lane will generate a best-possible trajectory based on the lane-level optimizer. Finally, a cross-lane trajectory decider will determine which lane to choose based on both the cost functional and safety rules.

\subsection{Path-Speed Iterative Algorithm}
In lane-level motion planning, optimality and time consumption are both important. 
Thus, many autonomous driving motion planning algorithms are developed in Frenet frames with time (SLT) to reduce the planning dimension with the help of a reference line.
Finding the optimal trajectory in a Frenet frame is essentially a 3D constrained optimization problem. There are typically two types of approaches: direct 3D optimization methods and the path-speed decoupled method. 
Direct methods (e.g., \cite{mcnaughton2011motion} and \cite{ziegler2009spatiotemporal}) attempt to find the optimal trajectory within SLT using either trajectory sampling or lattice search. These approaches are limited by their search complexity, which increases as both the spatial and temporal search resolutions increase. 
To qualify the time consumption requirement, one has to compromise with increasing the search grid size or sampling resolution. Thus, the generated trajectory is suboptimal. 
Conversely, the path-speed decoupled approach optimizes path and speed separately. Path optimization typically considers static obstacles. Then, the speed profile is created based on the generated path \cite{gu2015tunable}. It is possible that the path-speed approach is not optimal with the appearance of dynamic obstacles. 
However, since the path and speed are decoupled, this approach achieves more flexibility in both path and speed optimization. 

EM planner optimizes path and speed iteratively. The speed profile from the last cycle is used to estimate interactions with oncoming and low-speed dynamic obstacles in the path optimizer. Then, the generated path is sent to the speed optimizer to evaluate an optimal speed profile. For high-speed dynamic obstacles, EM planner prefers a lane-change maneuver rather than nudging for safety reasons. Thus, the iterative strategy of EM planner can help to address dynamic obstacles under the path-speed decoupled framework.

\subsection{Decisions and Traffic Regulations}

In EM planner, decisions and traffic regulations are two different constraints. 
Traffic regulations are a non-negotiable hard constraint, whereas obstacle yield, overtake, and nudge decisions are negotiable based on different scenarios. 
For the decision-making module, some planners directly apply numerical optimization \cite{xu2012real}, \cite{ziegler2009spatiotemporal} to make decisions and plans simultaneously. 
In Apollo EM planner, we make decisions prior to providing a smooth trajectory. 
The decision process is designed to make on-road intentions clear and reduce the search space for finding the optimal trajectory. Many decision-included planners attempt to generate vehicle states as the ego car decision. These approaches can be further divided into hand-tuning decisions (\cite{montemerlo2008junior}, \cite{urmson2008autonomous}, \cite{werling2010optimal}) and model-based decisions (\cite{bai2014integrated}, \cite{brechtel2014probabilistic}). 
The advantage of hand-tuning decision is its tunability. 
However, scalability is its limitation. In some cases, scenarios can go beyond the hand-tuning decision rule's description. 
Conversely, the model-based decision approaches generally discretize the ego car status into finite driving statuses and use data-driven methods to tune the model. In particular, some papers, such as \cite{cunningham2015mpdm} and \cite{galceran2015multipolicy}, propose a unified framework to handle decisions and obstacle prediction simultaneously. The consideration of multi-agent interactions will benefit both prediction and decision-making processes.

Targeting level-4 autonomous driving, a decision module shall include both scalability and feasibility. Scalability is the scenario expression ability (i.e., the autonomous driving cases that can be explained). 
When considering dozens of obstacles, the decision behavior is difficult to be accurately described by a finite set of ego car states.
For feasibility, we mean that the generated decision shall include a feasible region in which the ego car can maneuver within dynamic limitations. 
However, both hand-tuning and model-based decisions do not generate a collision-free trajectory to verify the feasibility. 

In EM planner's decision step, we describe the behavior differently. First, the ego car moving intention is described by a rough and feasible trajectory. Then, the interactions between obstacles are measured with this trajectory. This feasible trajectory-based decision is scalable even when scenarios become more complicated. 
Second, the planner will also generate a convex feasible space for smoothing spline parameters based on the trajectory. 
A quadratic-programming-based smoothing spline solver could be used to generate smoother path and speed profiles that follow the decision. This guarantees a feasible and smooth solution.

The remainder of this paper is organized as follows. In section \ref{sec:multi-lane}, we introduce the multilane framework inside EM planner. In section \ref{sec:lane-level}, we focus on lane-level optimization and discuss EM iteration step by step. In section \ref{sec:example}, we provide an example with oncoming traffic for a simple demonstration. Section \ref{sec:performance} discuss the performance of EM planner. In section \ref{sec:conclusion}, we finalize the discussion and present conclusions for EM planner.

\section{EM Planner Framework with Multilane Strategy}\label{sec:multi-lane}

\begin{figure}[htbp]
\begin{center}
\includegraphics[width = 0.45\textwidth]{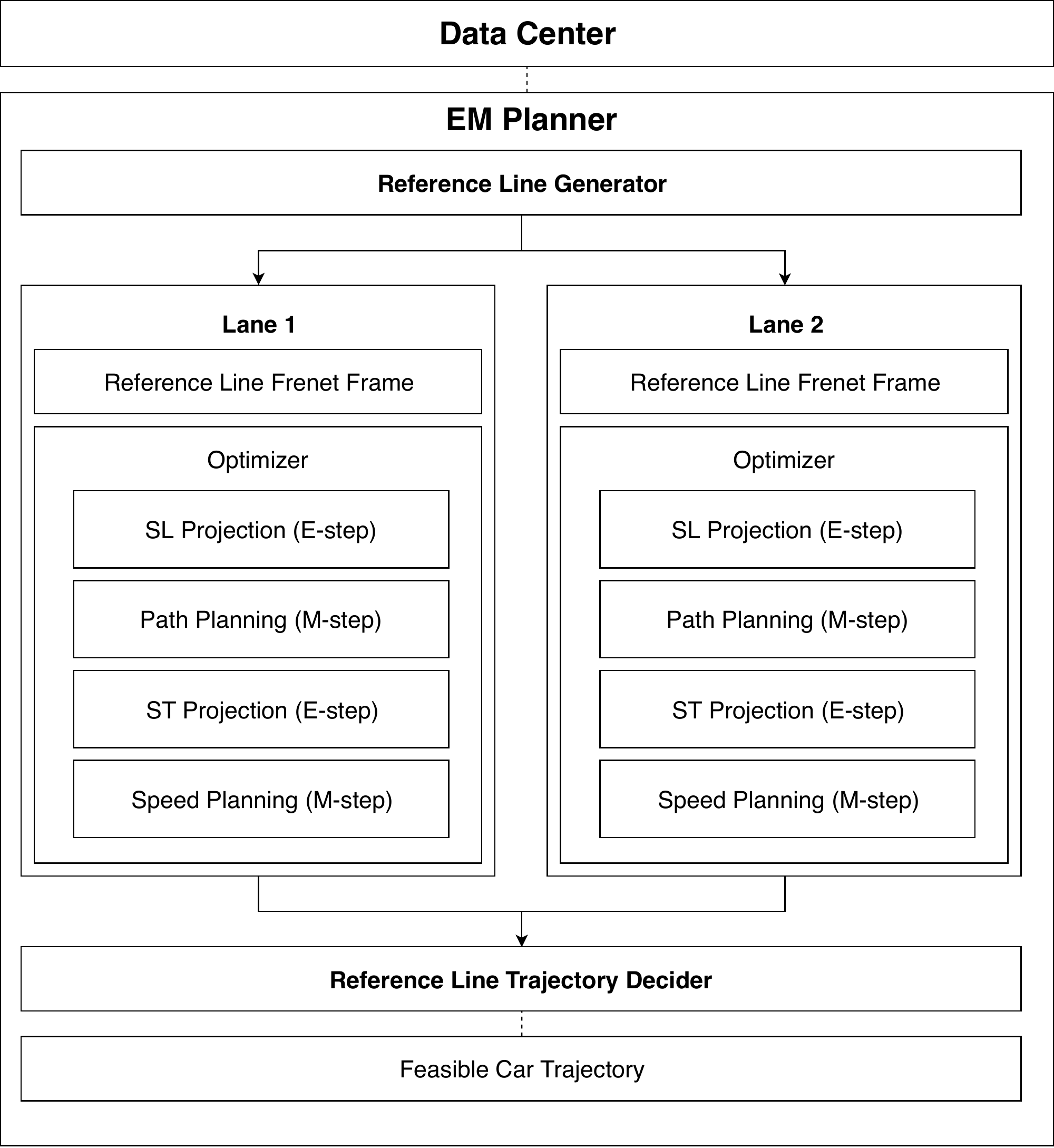}
\caption{EM Framework}
\captionsetup[figure]{labelfont={sc},textfont=normalfont,singlelinecheck=on,justification=centered,labelsep=colon}
\label{fig:em_structure}
\end{center}
\end{figure}

In this section, we first introduce the architecture of Apollo EM planner, and then we focus on the structure of the lane-level optimizer. Fig. \ref{fig:em_structure} presents an overview of EM planner. On top of the planner, all sources of information are collected and synced at the data center module. After data collection, the reference line generator will produce some candidate lane-level reference lines along with information about traffic regulations and obstacles. This process is based on the high-definition map and navigation information from the routing module. During lane-level motion planning, we first construct a Frenet frame based on a specified reference line. The relation between the ego car and its surrounding environment is evaluated in the Frenet frame constructed by the reference line, as well as traffic regulations. Furthermore, restructured information passes to the lane-level optimizer. The lane-level optimizer module performs path optimization and speed optimization. During path optimization, information about the surroundings is projected on the Frenet frame (E-step).
 Based on the information projected in the Frenet frame, a smooth path is generated (M-step). Similarly, during speed optimization, once a smooth path is generated by the path optimizer, obstacles are projected on the station-time graph (E-step). Then, the speed optimizer will generate a smooth speed profile (M-step). Combining path and speed profiles, we will obtain a smooth trajectory for the specified lane. In the last step, all lane-level best trajectories are sent to the reference line trajectory decider. Based on the current car status, regulations and the cost of each trajectory, the trajectory decider will decide a best trajectory for the ego car maneuver.

\section{EM Planner at Lane Level}\label{sec:lane-level}
In this section, we discuss the lane-level optimization problem. Fig. \ref{fig:em_iter} shows the path-speed EM iteration inside lane-level planning. The iteration includes two E-steps and two M-steps in one planning cycle. The trajectory information will iterate between planning cycles. We explain the submodules as follows.

\begin{figure}[htbp]
\begin{center}
\includegraphics[width = 0.45\textwidth]{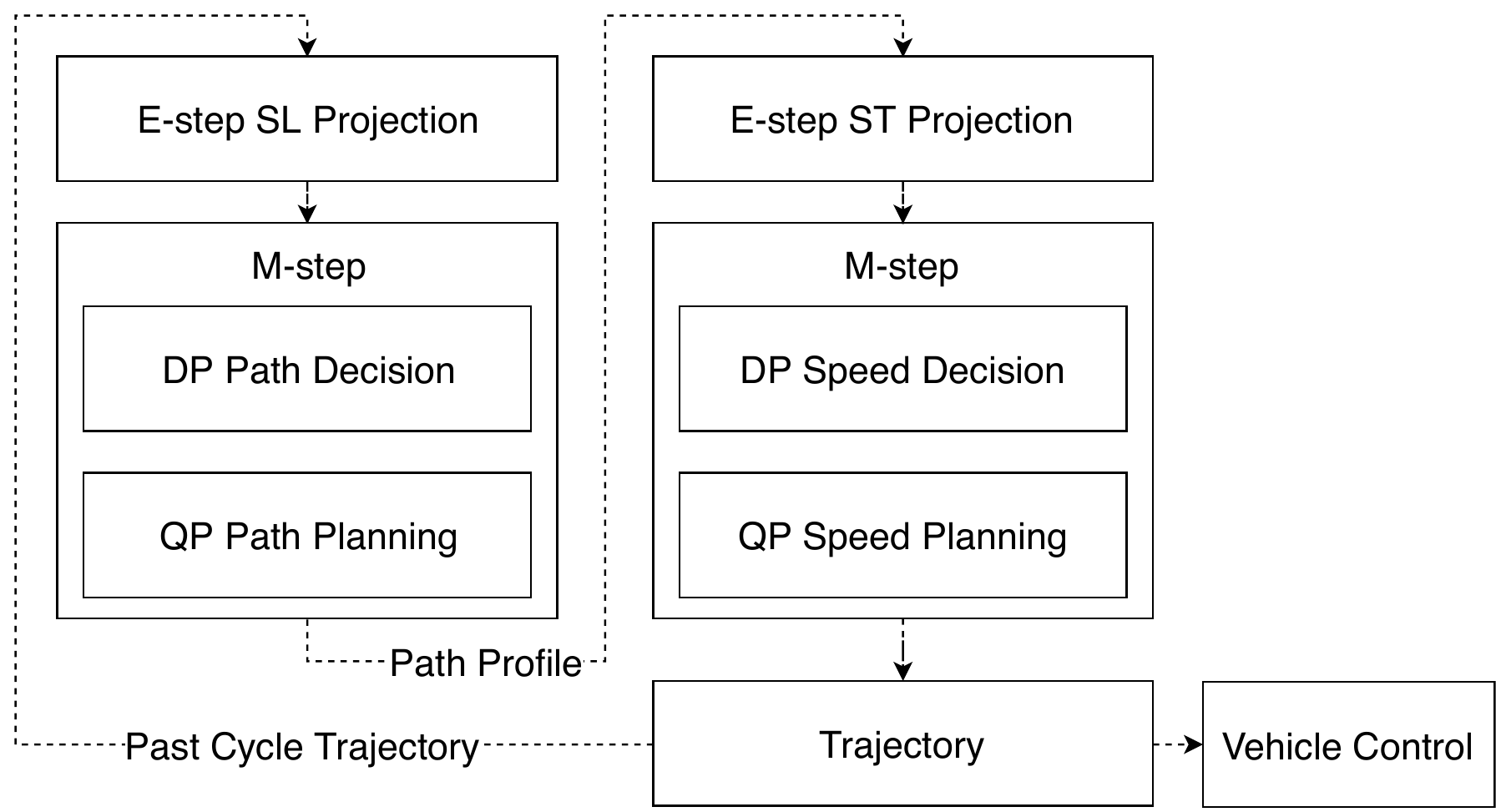}
\captionsetup[figure]{labelfont={sc},textfont=normalfont,singlelinecheck=on,justification=centered,labelsep=colon}\caption{EM Iteration}
\label{fig:em_iter}
\end{center}
\end{figure}

In the first E-step, obstacles are projected on the lane Frenet frame. This projection includes both static obstacle projection and dynamic obstacle projection. Static obstacles will be projected directly based on a Cartesian-Frenet frame transformation. In the Apollo framework, the intentions of dynamic obstacles are described with an obstacle moving trajectory. Considering the previous cycle planning trajectory, we can evaluate the estimated dynamic obstacle and ego car positions at each time point. Then, the overlap of dynamic obstacles and the ego car at each time point will be mapped in the Frenet frame. In addition, the appearance of dynamic obstacles during path optimization will eventually lead to nudging. Thus, for safety considerations, the SL projection of dynamic obstacles will only consider low-speed traffic and oncoming obstacles. For high-speed traffic, EM planner's parallel lane-change strategy will cover the scenario.
In the second E-step, all obstacles, including high-speed, low-speed and oncoming obstacles, are evaluated on the station-time frame based on the generated path profile. If the obstacle trajectory has overlap with the planned path, then a corresponding region in the station-time frame will be generated.

In two M-steps, path and speed profiles are generated by a combination of dynamic programming and quadratic programming. Although we projected obstacles on SL and ST frames, the optimal path and speed solution still lies in a non-convex space. Thus, we use dynamic programming to first obtain a rough solution; meanwhile, this solution can provide obstacle decisions such as nudge, yield and overtake. We use the rough decision to determine a convex hull for the quadratic-programming-based spline optimizer. Then, the optimizer can find solutions within the convex hull. We will cover the modules in the following.

\subsection{SL and ST Mapping (E-step)}

The SL projection is based on a G2 (continuous curvature derivative) smooth reference line as in \cite{werling2010optimal}. In Cartesian space, obstacles and the ego car status are described with location and heading $(x, y, \theta)$, as well as curvature and the derivative of curvature $(\kappa, d\kappa)$ for the ego car. Then, these are mapped to the Frenet frame coordinates $(s, l, dl, ddl, dddl)$, which represent station, lateral, and lateral derivatives. Since the positions of static obstacles are time invariant, the mapping is straightforward. For dynamic obstacles, we mapped the obstacles with the help of the last cycle trajectory of the ego car. The last cycle's moving trajectory is projected on the Frenet frame to extract the station direction speed profile.  This will provide an estimate of the ego car's station coordinates given a specific time. The estimated ego car station coordinates will help to evaluate the dynamic obstacle interactions. Once an ego car's station coordinates have interacted with an obstacle trajectory point with the same time, a shaded area on the SL map will be marked as the estimated interaction with the dynamic obstacle. Here, the interaction is defined as the ego car and obstacle bounding box overlapping. For example, as shown in Fig.~\ref{fig:sl_projection}, an oncoming dynamic obstacle and corresponding trajectory estimated from the prediction module are marked in red. The ego car is marked in blue. The trajectory of the oncoming dynamic obstacle is first discretized into several trajectory points with time, and then the points are projected to the Frenet frame. Once we find that the ego car's station coordinates have an interaction with the projected obstacle points, the overlap region (shown in purple in the figure) will be marked in the Frenet frame.

\begin{figure*}[htbp]
\begin{center}
\includegraphics[width = 1.0\textwidth]{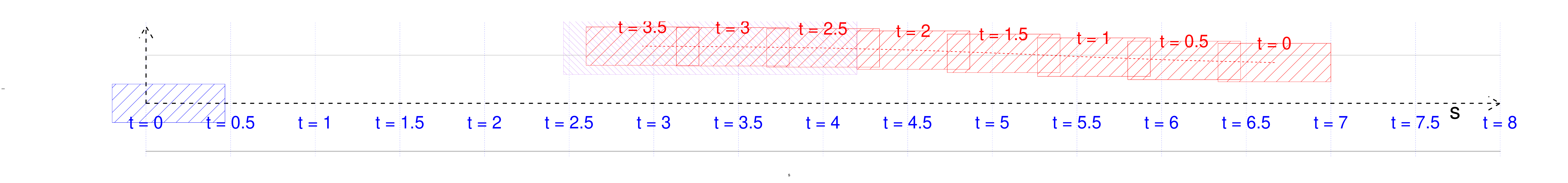}
\captionsetup[figure]{labelfont={sc},textfont=normalfont,singlelinecheck=on,justification=centered,labelsep=colon}
\caption{SL projection with oncoming traffic example}
\label{fig:sl_projection}
\end{center}
\end{figure*}

ST projection helps us evaluate the ego car's speed profile. After the path optimizer generates a smooth path profile in the Frenet Frame, both static obstacle and dynamic obstacle trajectories are projected on the path if there are any interactions.
 An interaction is also defined as the bounding boxes overlapping. In Fig.~\ref{fig:st_graph}, one obstacle cut into the ego driving path at 2 seconds and 40 meters ahead, as marked in red, and one obstacle behind the ego car is marked in green. The remaining region is the speed profile feasible region. The speed optimization M-step will attempt to find a feasible smooth solution in this region.

\begin{figure}[htbp]
\begin{center}
\includegraphics[width = 0.4\textwidth]{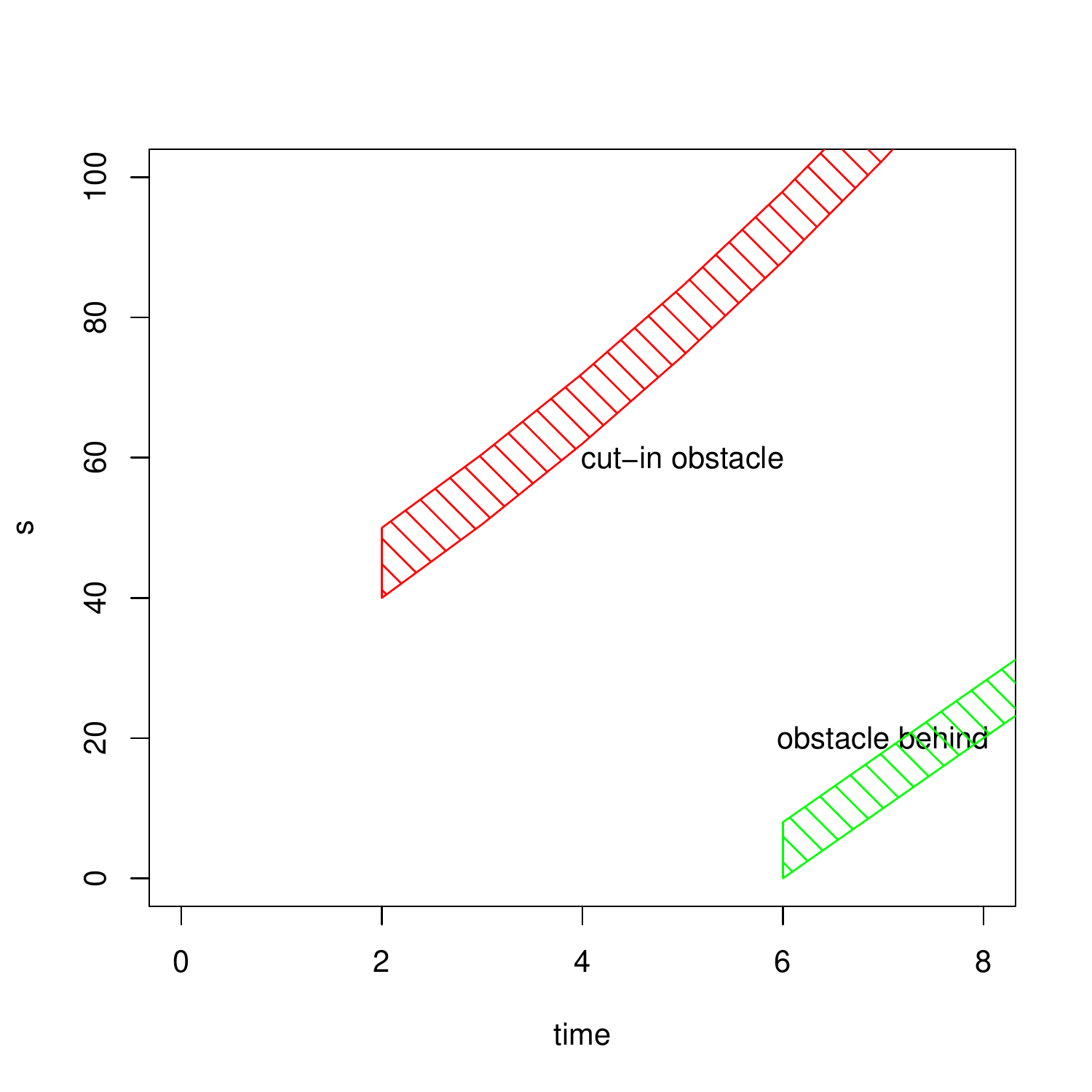}
\captionsetup[figure]{labelfont={sc},textfont=normalfont,singlelinecheck=on,justification=centered,labelsep=colon}
\caption{ST projection with cut-in obstacle and obstacle behind ego car}
\label{fig:st_graph}
\end{center}
\end{figure}

\subsection{M-Step DP Path}
The M-step path optimizer optimizes the path profile in the Frenet frame. This is represented as finding an optimal function of lateral coordinate $l = f(s)$ w.r.t. station coordinate in nonconvex SL space (e.g., nudging from left and right might be two local optima). Thus, the path optimizer includes two steps: dynamic-programming-based path decision and spline-based path planning. The dynamic programming path step provides a rough path profile with feasible tunnels and obstacle nudge decisions. As shown in Fig.~\ref{fig:dp_path_structure}, the step includes a lattice sampler,  cost function and dynamic programming search. 

The lattice sampler is based on a Frenet frame. As shown in Fig.~\ref{fig:dp_path}, multiple rows of points are first sampled ahead of the ego vehicle. Points between different rows are smoothly connected by quintic polynomial edges. The interval distance between rows of points depends on the speed, road structure, lane change and so forth. The framework allows customizing the sampling strategy based on application scenarios. For example, a lane change might need a longer sampling interval than current lane driving. In addition, the lattice total station distance will cover at least 8 seconds or 200 meters for safety considerations.

After the lattice is constructed, each graph edge is evaluated by the summation of cost functionals. We use information from the SL projection, traffic regulations and vehicle dynamics to construct the functional. The total edge cost functional is a linear combination of smoothness, obstacle avoidance and lane cost functionals. 
\[C_{total}(f(s)) = C_{smooth} (f) + C_{obs} (f) + C_{guidance} (f) \]

The smoothness functional for a given path is measured by:
\begin{align*} C_{smooth}(f) &= w_1 \int (f'(s))^2 ds + w_2 \int (f''(s))^2 ds\\
&+ w_3 \int (f'''(s))^2 ds.\end{align*}
In the smoothness cost functional, $f'(s)$ represents the heading difference between the lane and ego car, $f''(s)$ is related to the curvature of the path, and $f'''(s)$ is related to the derivative of the curvature of the ego car. With the form of polynomials, the above cost can also be evaluated analytically.

The obstacle cost given an edge is evaluated at a sequence of fixed station coordinates $\{s_0, s_1, ..., s_n\}$ with all obstacles. The obstacle cost functional is based on the bounding box distance between the obstacle and ego car. Denote the distance as $d$. The form of individual cost is given by:

\[C_{obs}(d) =  \begin{cases} 0, &d > d_n \\
						   C_{nudge} (d - d_c), &d_c \leq d \leq d_n  \\
						   C_{collision} & d < d_c\end{cases},\]
where $C_{nudge}$ is defined as a monotonically decreasing function. $d_c$ is set to leave a buffer for safety considerations. The nudge range $d_n$ is negotiable based on the scenario. $C_{collision}$ is the collision cost, which has a large value that helps to detect infeasible paths.

The lane cost includes two parts: guidance line cost and on-road cost. The guidance line is defined as an ideal driving path when there are no surrounding obstacles. This line is generally extracted as the centerline of the path. Define the guidance line function as $g(s)$. Then, it is measured as
\[C_{guidance}(f) = \int (f(s) - g(s))^2 ds\].
The on-road cost is typically determined by the road boundary. Path points that are outside the road will have a high penalty.

The final path cost is a combination of all these smooth, obstacle and lane costs.
Then, edge costs are used to select a candidate path with the lowest cost through a dynamic programming search.
The candidate path will also determine the obstacle decisions. For example, in Fig.~\ref{fig:dp_path}, the obstacle is marked as a nudge from the right side. 

\begin{figure}[htbp]
\begin{center}
\includegraphics[width = 0.45\textwidth]{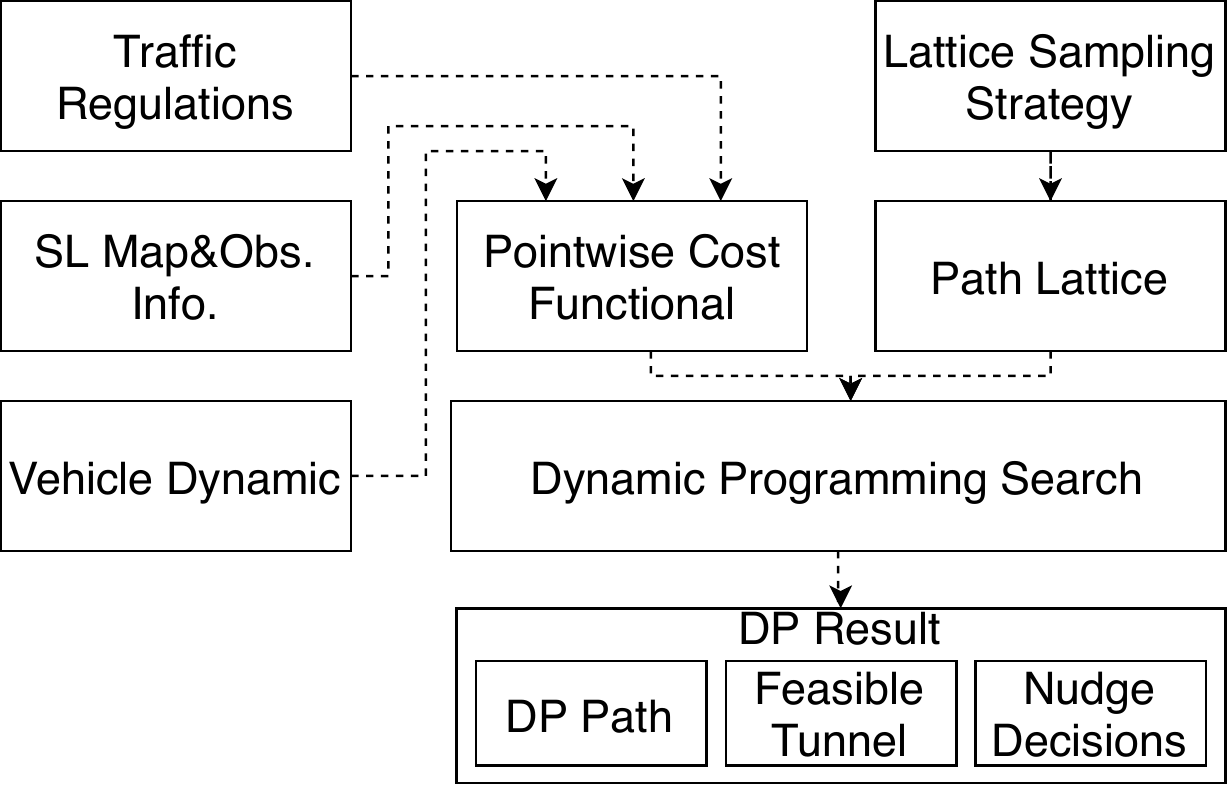}
\captionsetup[figure]{labelfont={sc},textfont=normalfont,singlelinecheck=on,justification=centered,labelsep=colon}
\caption{Dynamic programming structure}
\label{fig:dp_path_structure}
\end{center}
\end{figure}

\begin{figure*}[htbp]
\begin{center}
\includegraphics[width = 1.0\textwidth]{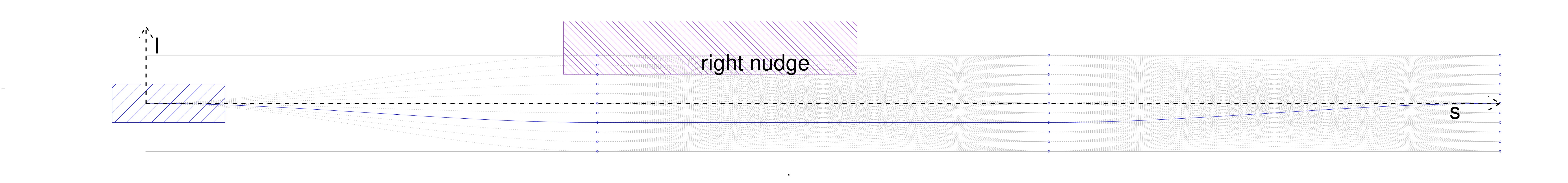}
\includegraphics[width = 1.0\textwidth]{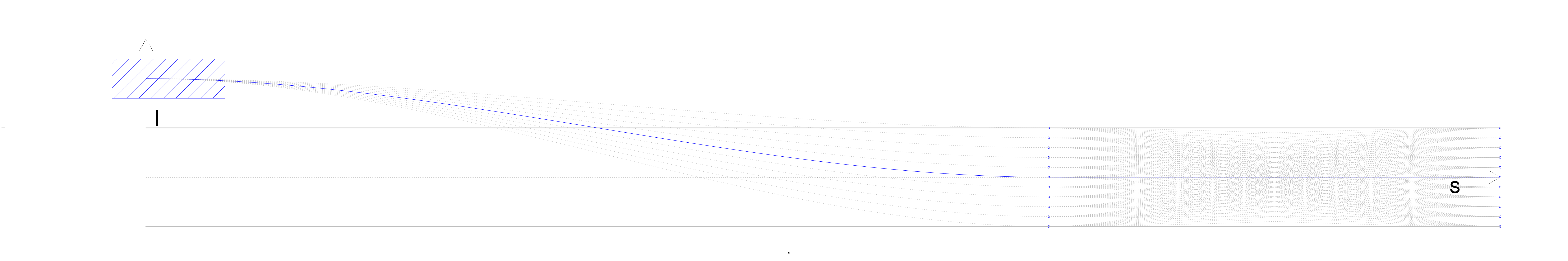}
\captionsetup[figure]{labelfont={sc},textfont=normalfont,singlelinecheck=on,justification=centered,labelsep=colon}
\caption{Dynamic programming path optimizer sampling for default lane and change lane}
\label{fig:dp_path}
\end{center}
\end{figure*}

\subsection{M-Step Spline QP Path}
The spline QP path step is a refinement of the dynamic programming path step. In a dynamic programming path, a feasible tunnel is generated based on the selected path. Then, the spline-based QP step will generate a smooth path within this feasible tunnel, as shown in Fig.~\ref{fig:spline_qp_path_1}. 

\begin{figure}[htbp]
\begin{center}
\includegraphics[width = 0.45\textwidth]{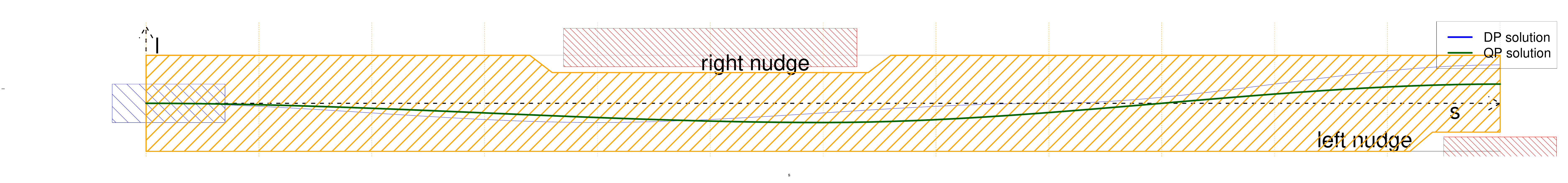}
\captionsetup[figure]{labelfont={sc},textfont=normalfont,singlelinecheck=on,justification=centered,labelsep=colon}
\caption{Spline QP Path Example}
\label{fig:spline_qp_path_1}
\end{center}
\end{figure}

The spline QP path is generated by optimizing an objective function with a linearized constraint through the QP spline solver. Fig.~\ref{fig:spline_qp_path_structure} shows the pipeline of the QP path step.

\begin{figure}[htbp]
\begin{center}
\includegraphics[width = 0.45\textwidth]{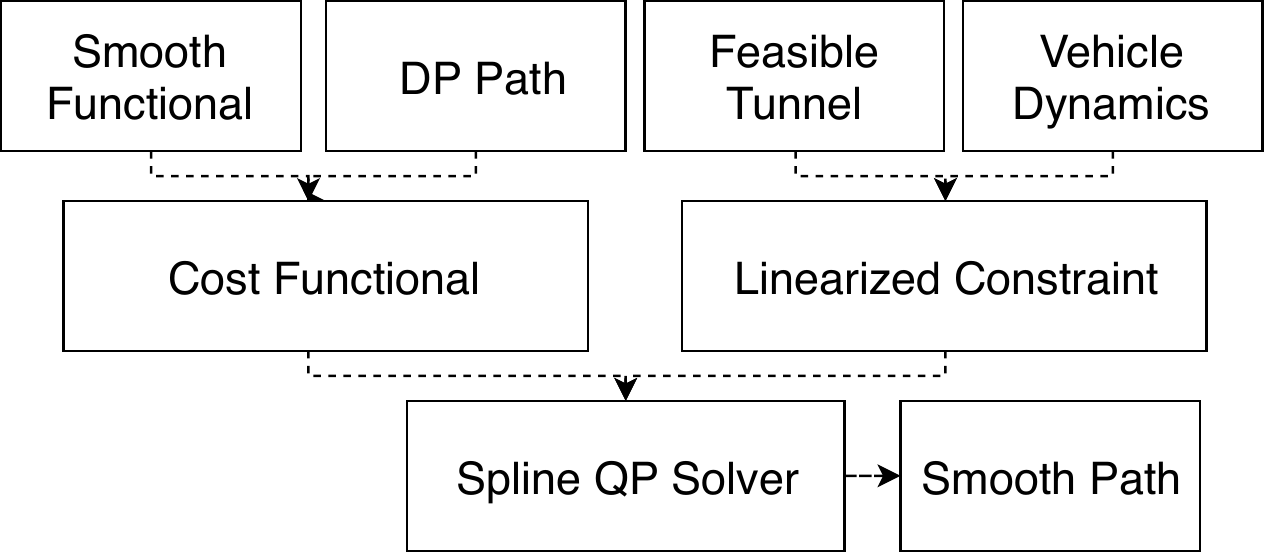}
\captionsetup[figure]{labelfont={sc},textfont=normalfont,singlelinecheck=on,justification=centered,labelsep=colon}
\caption{Spline QP Path Example}
\label{fig:spline_qp_path_structure}
\end{center}
\end{figure}

The objective function of the QP path is a linear combination of smoothness costs and guidance line cost. The guidance line in this step is the DP path. The guidance line provides an estimate of the obstacle nudging distance. Mathematically, the QP path step optimizes the following functional:
\begin{align*}
C_{s}(f) &= w_1 \int (f'(s))^2 ds + w_2 \int (f''(s))^2 ds \\
&+ w_3 \int (f'''(s))^2 + w_4 \int(f(s) - g(s))^2ds.
\end{align*}
where $g(s) $ is the DP path result. $f'(s)$, $f''(s)$ and $f'''(s)$ are related to the heading, curvature and derivative of curvature. The objective function describes the balance between nudging obstacles and smoothness. 

The constraints in the QP path include boundary constraints and  dynamic feasibility. These constraints are applied on $f(s), f'(s)$ and $f''(s)$ at a sequence of station coordinates $s_0, s_1, ...., s_n$. To extract boundary constraints, the feasible ranges at station points are extracted. The feasible range at each point is described as $(l_{low, i}, \leq l_{high, i})$. In EM planner, the ego vehicle is considered under the bicycle model. Thus, simply providing a range for $l = f(s)$ is not sufficient since the heading of the ego car also matters.

As shown in Fig.~\ref{fig:spline_qp_path_2}, to keep the boundary constraint convex and linear, we add two half circles on the front and rear ends of the ego car. Denote the front-to-rear wheel center distance as $l_f$ and the vehicle width as $w$. Then, the lateral position of the left-front corner is given by
$$l_{\text{left front corner}} =  f(s) + sin(\theta) l_f + w / 2 $$, 
where $\theta$ is the heading difference between the ego car and road station direction. The constraint can be further linearized using the following inequality approximation:
\begin{align*}
f(s) + sin(\theta) l_f + w / 2 &\leq  f(s) + f'(s) l_r + w / 2\\
& \leq l_{\text{left corner bound}}
\end{align*}
Similarly, the linearization can be applied on the remaining three corners. The linearized constraints are good enough since $\theta$ is generally small. For $\theta < pi / 12 $, the estimation will be less than 2 - 3 cm conservative on the lateral direction compared to the constraint without linearization.

\begin{figure}[htbp]
\begin{center}
\includegraphics[width = 0.45\textwidth]{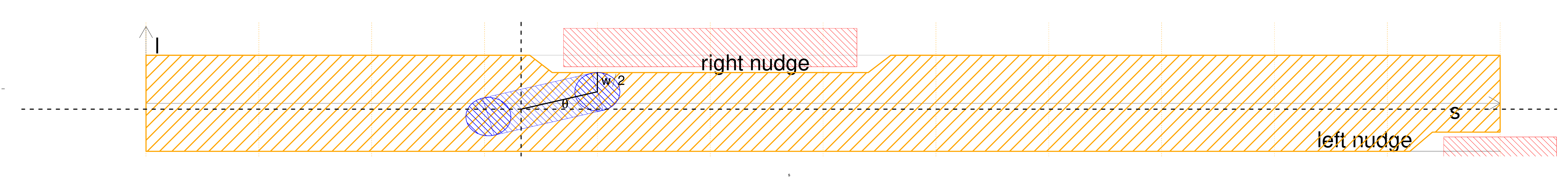}
\captionsetup[figure]{labelfont={sc},textfont=normalfont,singlelinecheck=on,justification=centered,labelsep=colon}
\caption{Spline QP Path Constraint Linearization}
\label{fig:spline_qp_path_2}
\end{center}
\end{figure}
The range constraints of $f''(s)$ and $f'''(s)$ can also be used as dynamic feasibility since they are related to curvature and the curvature derivative. In addition to the boundary constraint, the generated path shall match the ego car's initial lateral position and derivatives $(f(s_0), f'(s_0), f''(s_0))$. Since all constraints are linear with respect to spline parameters, a quadratic programming solver can be used to solve the problem very fast.

The details of the smoothing spline and quadratic programming problem are covered in \ref{sec:appendix}.

\subsection{M-Step DP Speed Optimizer}
The speed optimizer generates a speed profile in the ST graph, which is represented as a station function with respect to time $S(t)$. Similar to in the path optimizer, finding a best speed profile on the ST graph is a non-convex optimization problem. We use dynamic programming combined with spline quadratic programming to find a smooth speed profile on the ST graph.
In Fig.~\ref{fig:dp_speed_structure}, the DP speed step includes a cost functional, ST graph grids and dynamic programming search. The generated result includes a piecewise linear speed profile, a feasible tunnel and obstacle speed decisions, as shown in Fig.~\ref{fig:dp_speed_1}. The speed profile will be used in the spline QP speed step as a guidance line, and the feasible tunnel will be used to generate a convex region.
\begin{figure}[htbp]
\begin{center}
\includegraphics[width = 0.4\textwidth]{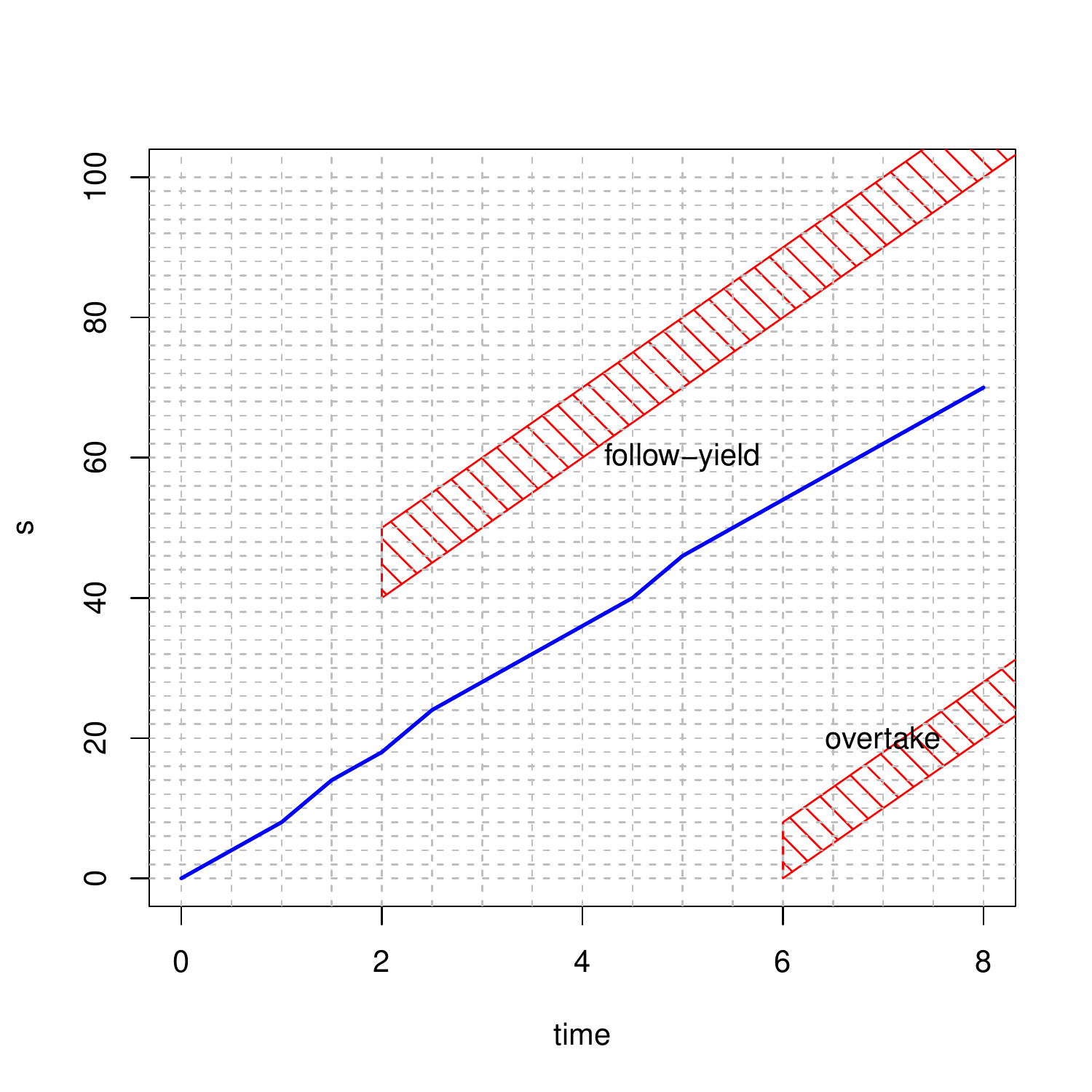}
\captionsetup[figure]{labelfont={sc},textfont=normalfont,singlelinecheck=on,justification=centered,labelsep=colon}
\caption{DP Speed Optimizer}
\label{fig:dp_speed_1}
\end{center}
\end{figure}

In detail, obstacle information is first discretized into grids on the ST graph. Denote $(t_0, t_1, ..., t_n)$ as equally spaced evaluated points on the time axis with interval $dt$. A piecewise linear speed profile function is represented as $S = (s_0, s_1, ..., s_n)$ on the grids. Furthermore, the derivatives are approximated by the finite difference method. 
\begin{eqnarray*} s_i' = v_i \approx& \frac{s_i - s_{i - 1}}{dt} \\
s_i'' = a_i \approx& \frac{s_i - 2 s_{i - 1} + s_{i - 2}}{(dt)^2} \\
s_i''' = j_i  \approx& \frac{s_i - 3 s_{i - 1}  - 3 s_{i - 2} + s_{i - 3}}{(dt)^3}\\
\end{eqnarray*}
The goal is to optimize a cost functional in the ST graph within the constraints. In detail, the cost for the DP speed optimizer is represented as follows:
\begin{align*}
 C_{total}(S) &= w_1 \int_{t_0}^{t_n} g(S' - V_{ref}) dt \\
 &+w_2 \int_{t_0}^{t_n} (S'')^2 dt + w_3\int_{t_0}^{t_n} (S''')^2 dt \\
 &+ w_4 C_{obs}(S)
 \end{align*}
The first term is the velocity keeping cost. This term indicates that the vehicle shall follow the designated speed when there are no obstacles or traffic light restrictions present.  $V_{ref}$ describes the reference speed, which is determined by the road speed limits, curvature and other traffic regulations. The $g$ function is designed to have different penalties for values that are less or greater than $V_{ref}$. The acceleration and jerk square integral describes the smoothness of the speed profile. The last term, $C_{obs}$, describes the total obstacle cost. The distances of the ego car to all obstacles are evaluated to determine the total obstacle costs.

The dynamic programming search space is also within the vehicle dynamic constraints. The dynamic constraints include acceleration, jerk limits and a monotonicity constraint since we require that the generated trajectories do not perform backing maneuvers when driving on the road. Backing can only be performed under parking or other specified scenarios. The search algorithm is straightforward; some necessary pruning based on vehicle dynamic constraints is also applied to accelerate the process. We will not discuss the search part in detail.

\begin{figure}[htbp]
\begin{center}
\includegraphics[width = 0.45\textwidth]{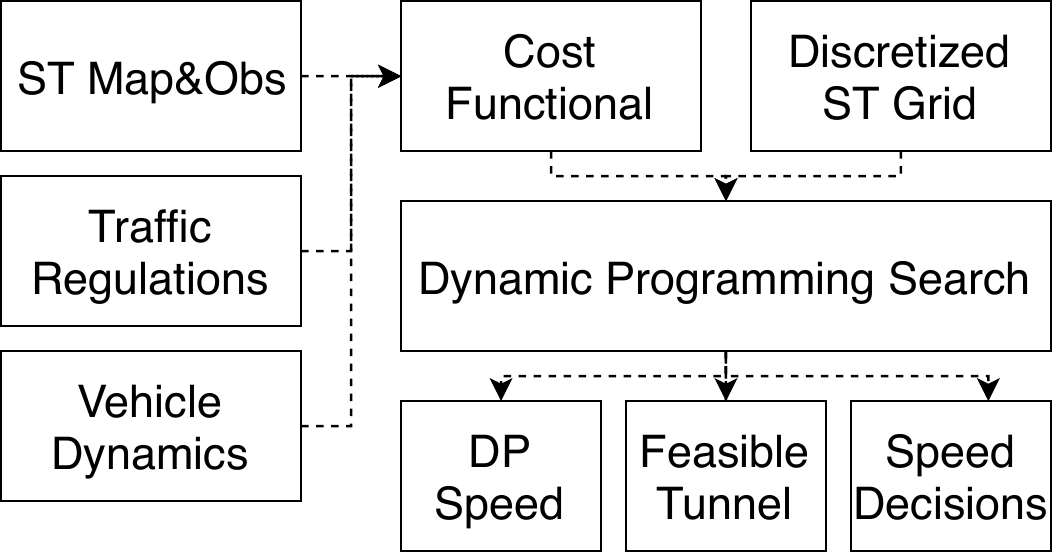}
\captionsetup[figure]{labelfont={sc},textfont=normalfont,singlelinecheck=on,justification=centered,labelsep=colon}
\caption{DP Speed Optimizer}
\label{fig:dp_speed_structure}
\end{center}
\end{figure}

\subsection{M-Step QP Speed Optimizer}
Since the piecewise linear speed profile cannot satisfy dynamic requirements, the spline QP step is needed to fill this gap. In Fig.~\ref{fig:qp_speed_structure}, the spline QP speed step includes three parts: cost functional, linearized constraint and spline QP solver. 

The cost functional is described as follows:
\begin{align*}
C_{total}(S) &= w_1 \int_{t_0}^{t_n} (S - S_{ref})^2 dt + w_2 \int_{t_0}^{t_n} (S'')^2 dt\\
& + w_3\int_{t_0}^{t_n} (S''')^2 dt.
\end{align*}
The first term measures the distance between the DP speed guidance profile $S_{ref}$ and generated path $S$. The acceleration and jerk terms are measures of the speed profile smoothness. Thus, the objective function is a balance between following the guidance line and smoothness.
\begin{figure}[htbp]
\begin{center}
\includegraphics[width = 0.4\textwidth]{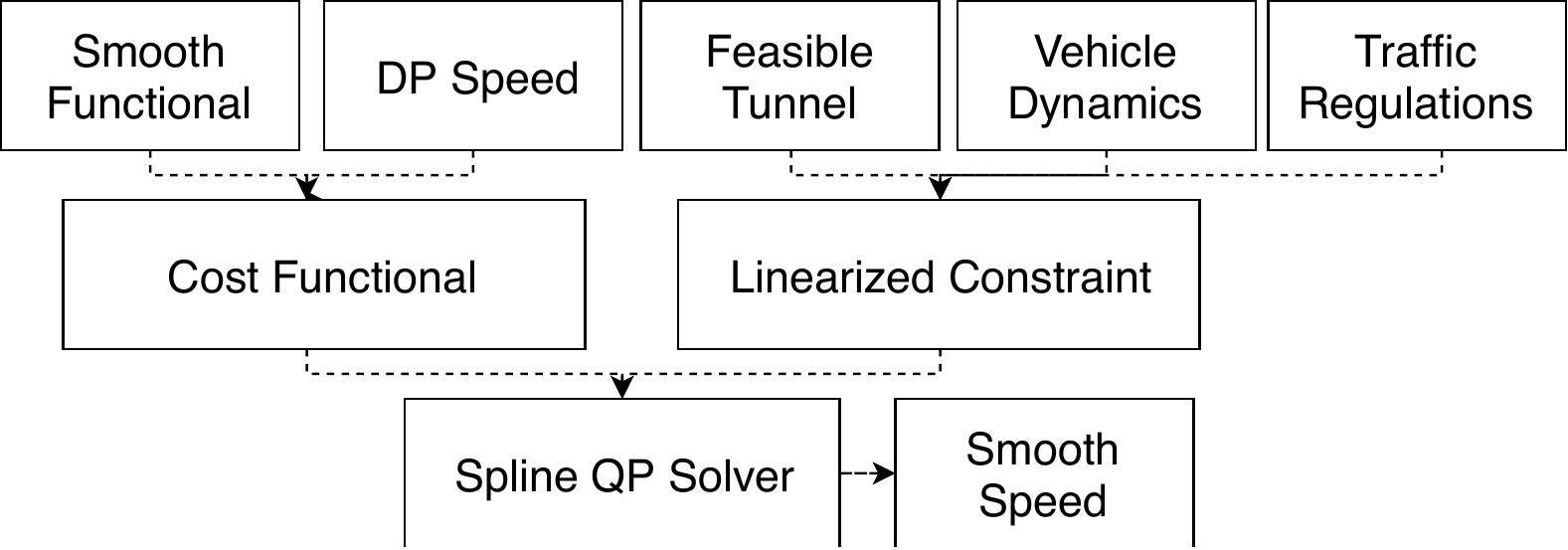}
\captionsetup[figure]{labelfont={sc},textfont=normalfont,singlelinecheck=on,justification=centered,labelsep=colon}
\caption{Spline QP speed optimizer}
\label{fig:qp_speed_structure}

\end{center}
\end{figure}

The spline optimization is within the linearized constraint. The constraints in QP speed optimization include the following boundary constraints:
\[S(t_i) \leq S(t_{i + 1}), i = 0, 1, 2, ..., n - 1,\]
\[S_{l, t_i} \leq S(t_i) \leq S_{u, t_i},\]
\[ S'(t_i) \leq V_{upper},\]
\[ -Dec_{max} \leq S''(t_i) \leq Acc_{max}\]
\[ -J_{max} \leq S'''(t_i) \leq J_{max}\]
as well as constraints for matching the initial velocity and acceleration. The first constraint is monotonicity evaluated at designated points. The second, third, and fourth constraints are requirements from traffic regulations and vehicle dynamic constraints. After wrapping up the cost objective and constraints, the spline solver will generate a smooth feasible speed profile as in Fig.~\ref{fig:qp_speed_1}. Combined with the path profile, EM planner will generate a smooth trajectory for the control module.

\begin{figure}[htbp]
\begin{center}
\includegraphics[width = 0.5\textwidth]{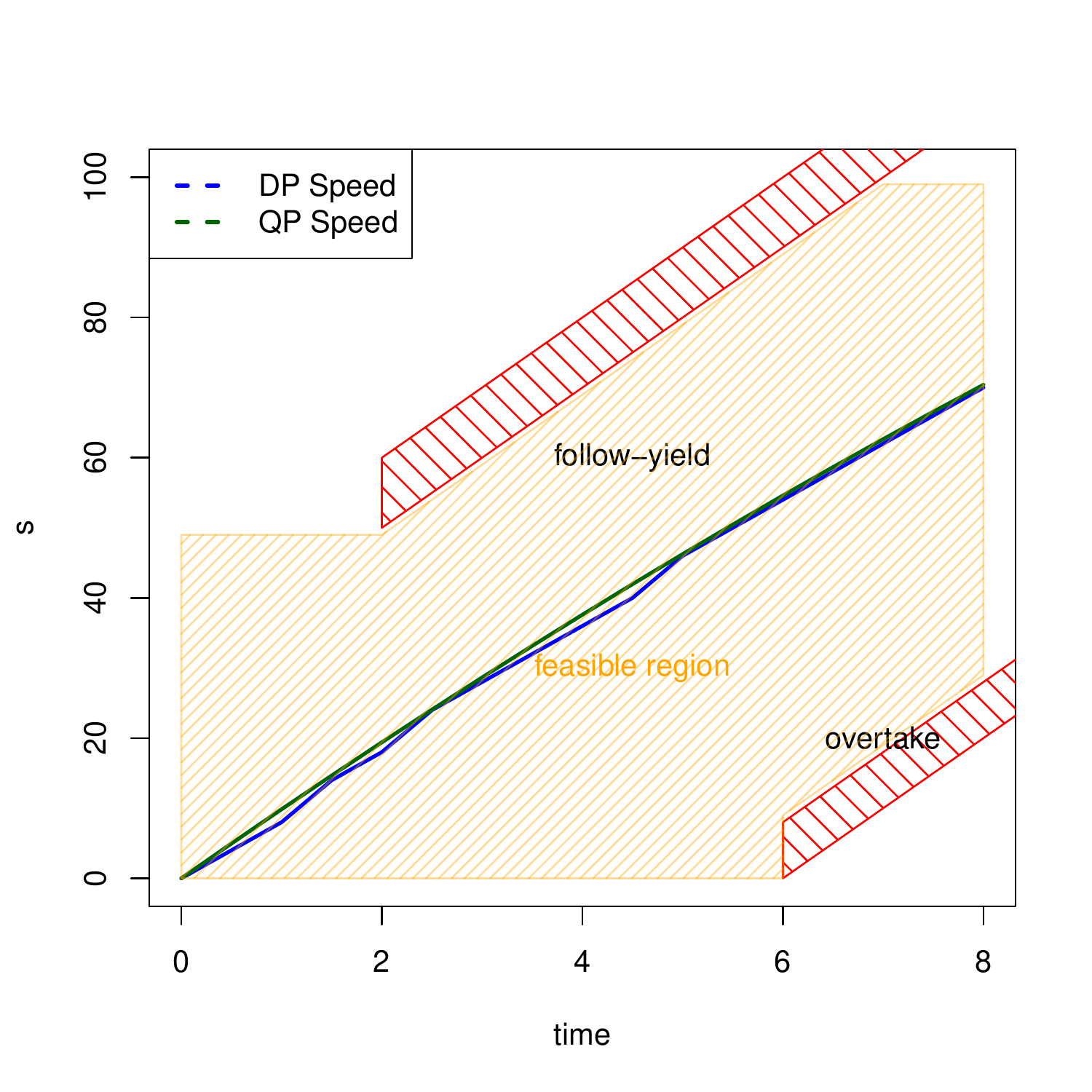}
\captionsetup[figure]{labelfont={sc},textfont=normalfont,singlelinecheck=on,justification=centered,labelsep=colon}
\caption{Spline QP speed optimizer}
\label{fig:qp_speed_1}
\end{center}
\end{figure}

\subsection{Notes on Solving Quadratic Programming Problems}
For safety considerations, we evaluate the path and speed at approximately one-hundred different locations or time points. The number of constraints is greater than six hundred. For both the path and speed optimizers, we find that piecewise quintic polynomials are good enough. The spline generally contains 3 to 5 polynomials with approximately 30 parameters. Thus, the quadratic programming problem has a relatively small objective function but  large number of constraints. Consequently, an active set QP solver is good for solving the problem. In addition to accelerating the quadratic programming, we use the result calculated in the last cycle  as a hot start. The QP problem can be solved within 3 ms on average, which satisfies our time consumption requirement.

\subsection{Notes on Non-convex Optimization With DP and QP}
DP and QP alone both have their limitations in the non-convex domain. A combination of DP and QP will take advantage of the two and reach an ideal solution.

\begin{itemize}
\item{
\textit{DP:}
As described earlier in this manuscript, the DP algorithm depends on a sampling step to generate candidate solutions. 
Because of the restriction of processing time, the number of sampled candidates is limited by the sampling grid.
Optimization within a finite grid yields a rough DP solution.
In other words, DP does not necessarily, and in almost all cases would not, deliver the optimal solution. 
For example, DP could select a path that nudges the obstacle from the left but not nudge with the best distance.
}

\item{
\textit{QP:}
Conversely, QP generates a solution based on the convex domain. It is not available without the help of the DP step.
For example, if an obstacle is in front of the master vehicle, QP requires a decision, such as nudge from the left, nudge from the right, follow, or overtake, to generate its constraint.
A random or rule-based decision will make QP susceptible to failure or fall into a local minimal. 
}

\item{
\textit{DP + QP:}
A DP plus QP algorithm will minimize the limitations of both:
(1) The EM planner first uses DP to search within a grid to reach a rough resolution.
(2) The DP results are used to generate a convex domain and guide QP.
(3) QP is used to search for the optimal solution in the convex region that most likely contains global optima. 
}
\end{itemize}

\section{Case Study}\label{sec:example}
As mentioned in the above sections, although most state-of-the-art planning algorithms are based on heavy decisions, EM planner is a light-decision-based planner. It is true that a heavy-decision-based algorithm, or heavily rule-based algorithm, is easily understood and explained. The disadvantages are also clear: it may be trapped in corner cases (while its frequency is closely related to the complexity and magnitude of the number of rules) and not always be optimal.
In this section, we will illustrate the benefits of the light-decision-based planning algorithm by presenting several case studies. These cases were exposed during the intense daily test routines in Baidu's heavy-decision planning modules and solved by the latest light-decision planning module.

\begin{figure*}[htbp]
\begin{center}
    \begin{subfigure}[b]{0.7\textwidth}
    	\vspace{-0.5cm}
	\includegraphics[width = \textwidth]{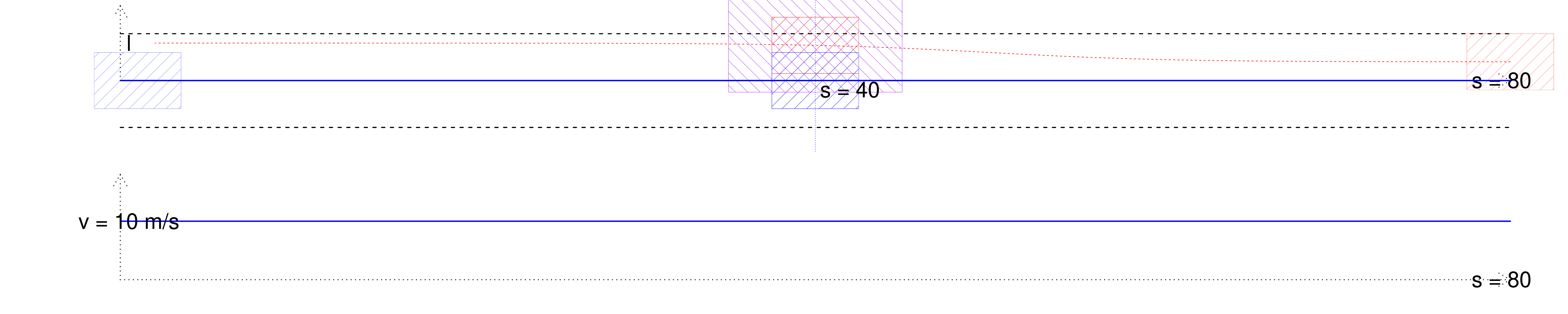}
	\vspace{-0.7cm}
        \caption{Stage A: Historical Planning}
        \label{fig:cs1}
        \vspace{0.3cm}
    \end{subfigure}
    \begin{subfigure}[b]{0.7\textwidth}
	\includegraphics[width = \textwidth]{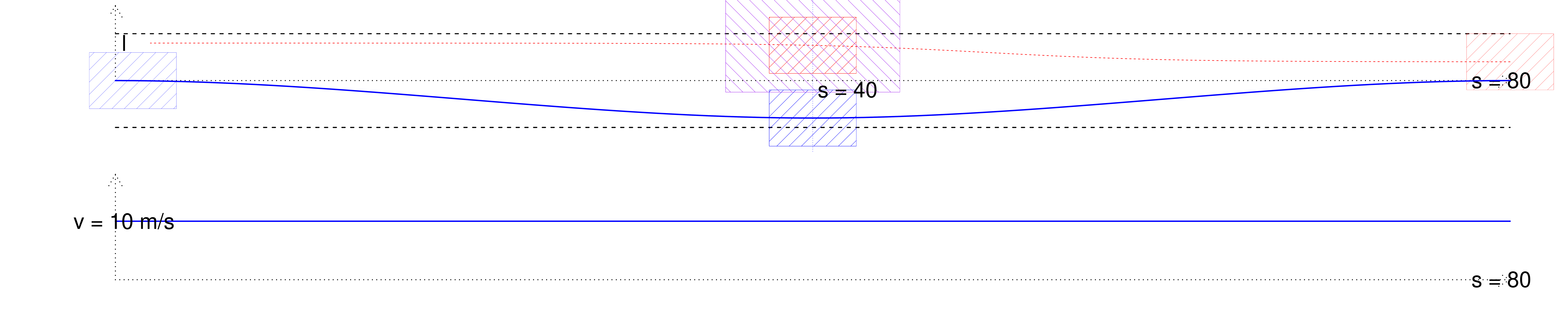}
	\vspace{-0.7cm}
        \caption{Stage B: Path Planning Cycle 1}
         \label{fig:cs2}
        \vspace{0.3cm}
    \end{subfigure}
        \begin{subfigure}[b]{0.7\textwidth}
	\includegraphics[width = \textwidth]{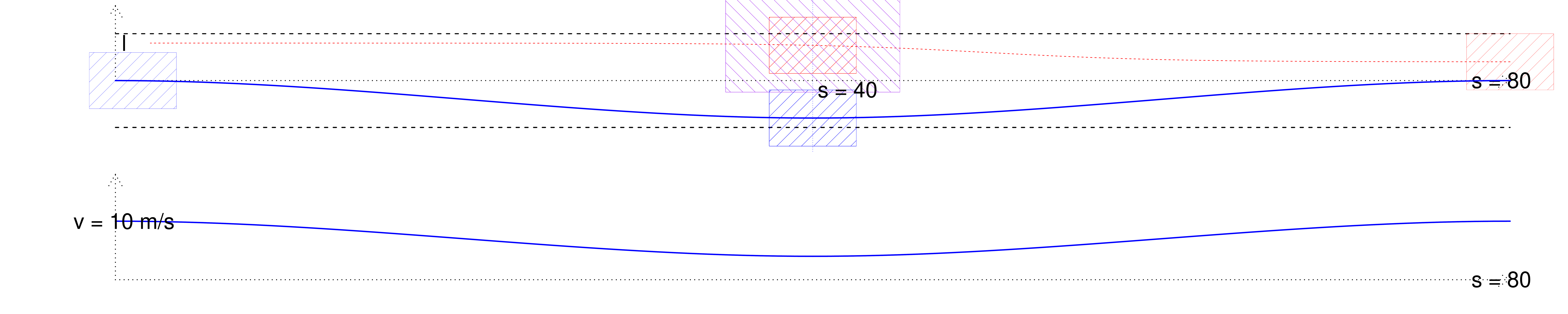}
	\vspace{-0.7cm}
        \caption{Stage C: Speed Planning Cycle 1}
         \label{fig:cs3}
        \vspace{0.3cm}
    \end{subfigure}
      \begin{subfigure}[b]{0.7\textwidth}
	\includegraphics[width = \textwidth]{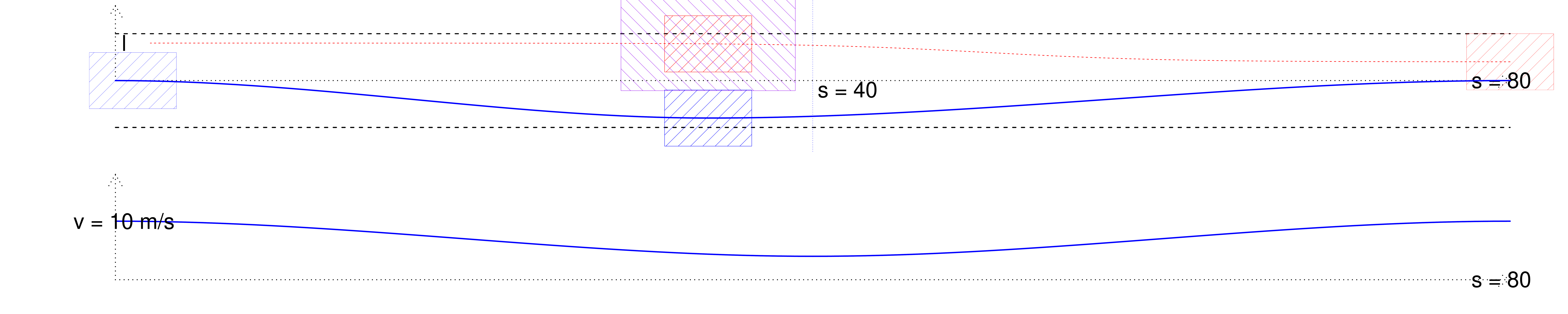}
	\vspace{-0.7cm}
        \caption{Stage D: Path Planning Cycle 2}
         \label{fig:cs4}
        \vspace{0.3cm}
    \end{subfigure}
          \begin{subfigure}[b]{0.7\textwidth}
	\includegraphics[width = \textwidth]{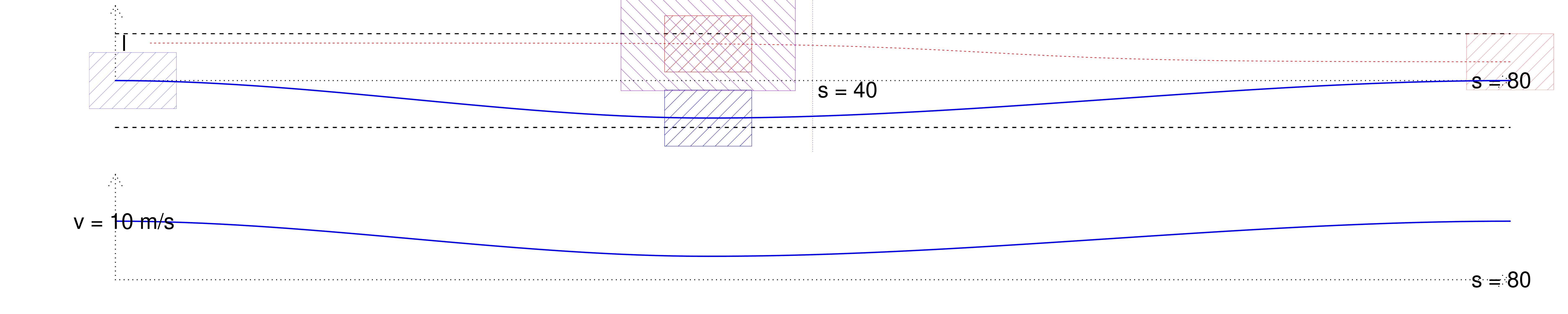}
	\vspace{-0.7cm}
        \caption{Stage E: Speed Planning Cycle 2}
         \label{fig:cs5}
    \end{subfigure}

\caption{Case Study - Nudge oncoming dynamic obstacle}
This figure shows how the EM planner manages to iteratively solve the update path and speed profile.
\label{fig:casestudy1}
\end{center}
\end{figure*}

Figure \ref{fig:casestudy1} is a hands-on example of how EM planner iterates within and between planning cycles to achieve the optimal trajectory. 
In this case study, we demonstrate how the trajectory is generated while an obstacle enters our path.
Assuming that the master vehicle has a speed of 10 meters per second and there is a dynamic obstacle that is moving toward us in the opposite direction with a speed that is also 10 meters per second, EM planner generates the path and speed profile iteratively with the steps below.

\begin{enumerate}

\item{
\textit{Historical Planning (Figure \ref{fig:cs1}).} 
In the historical planning profile, i.e., before the dynamic obstacle enters, the master vehicle is moving forward straight with a constant speed of 10 meters per second.
}

\item{
\textit{Path Profile Iteration 1 (Figure \ref{fig:cs2}).} 
During this step, the speed profile is cruising at 10 m/s from the historical profile. 
Based on this cursing speed, the master vehicle and the dynamic obstacle will meet each other at position $S=40m$.
Consequently, the best way to avoid this obstacle is to nudge it from the right side at $S=40m$.
}

\item{
\textit{Speed Profile Iteration 1 (Figure \ref{fig:cs3}).}
Based on the path profile, which is nudge from the right, from step 1,
the master vehicle adjusts its speed according to its interaction with the obstacle.
Thus, the master vehicle will slow to 5 m/s when passing an obstacle with a slower speed, as passengers may expect.
}

\item{
\textit{Path Profile Iteration 2 (Figure \ref{fig:cs4}).}
Under the new speed profile, which is slower than the original one, the master vehicle no longer passes the dynamic obstacle at $S=40m$ but rather a new position at $S=30m$. 
Thus, the path to nudge the obstacle should be updated to a new one to maximize the nudge distance at $S=30m$.
}

\item{
\textit{Speed Profile Iteration 2 (Figure \ref{fig:cs5}).}
Under the new path profile, where the nudge is performed at $S=30m$, the slow down at $S=40m$ is no longer necessary. The new speed profile indicates that the master vehicle can accelerate at $S=40m$ and still generate a smooth pass at $S=30m$.
}

\end{enumerate}
Thus, the final trajectory based on the four steps is to slow to nudge the obstacle at S=30 m and then accelerate after the master vehicle passes the obstacle, which is very likely how human drivers perform under this scenario.

Note that it is not necessary to always take exactly four steps to create the plan. It could take fewer or more steps depending on the scenario.
In general, the more complicated the environment is, the more steps that may be required.

\section{Computational Performance}\label{sec:performance}

Because the three-dimensional station-lateral-speed problem has been split into two two-dimensional problems, i.e., station-lateral problem and station-speed problem, the computational complexity of EM planner has been significantly decreased, and thus, this planner is very efficient. 
Assuming that we have n obstacles with M candidate path profiles and N candidate speed profiles, the computational complexity of this algorithm is O(\textit{n(M+N)}).

On a PIC of Nuvo-6108GC-GTX1080-E3-1275, with DDR4-16GB-ECC and HDD1TB-72 
\cite{http://apollo.auto/platform/hardware.html}, it takes less than 100 ms on average.

\section{Conclusion}\label{sec:conclusion}

EM planner is a light-decision-based algorithm. 
Compared with other heavy-decision-based algorithms, the advantage of EM planner is its ability to perform under complicated scenarios with multiple obstacles. When heavy-decision-based methods attempt to predetermine how to act with each obstacle, the difficulties are significant: 
(1) It is difficult to understand and predict how obstacles interact with each other and the master vehicle; thus, their following movement is hard to describe and therefore hard to be considered by any rules.
(2) With multiple obstacles blocking the road, the probability of not finding a trajectory that meets all predetermined decisions is dramatically reduced, leading  to planning failure.

One critical issue in autonomous driving vehicles is the challenge of safety vs. passability. 
A strict rule increases the safety of the vehicle but lowers the passability, and vice versa.
Take the lane-changing case as an example; one could easily pause the lane-changing process if there is a vehicle behind with simple rules. This could grant safety but considerably decreases the passability.
EM planner described in this manuscript is also designed to solve the inconsistency of potential decisions and planning, while it also improves the passability of autonomous driving vehicles.

EM planner significantly reduces computational complexity by transforming a three-dimensional station-lateral-speed problem into two two-dimensional station-lateral/station-speed problems. It could significantly reduce the processing time and therefore increase the interaction ability of the whole system.

As of May 16th, 2018, the effectivity of this system has been proven under 3,380 hours and approximately 68,000 kilometers (42,253 miles) of intense closed-loop testing in Baidu Apollo autonomous driving vehicles. 
The algorithm has been evaluated under different countries, traffic laws and conditions, including extremely crowded urban scenarios such as Beijing, China, and Sunnyvale, CA, USA.
The algorithm has also been evaluated and tested in more than one-hundred-thousand hours and a million kilometers (0.621 million miles) simulation test.

The algorithm described in this manuscript is available at 
\url{https://github.com/ApolloAuto/apollo-11/tree/master/modules/planning}.

\section*{Acknowledgement}
We would like to thank Apollo Community for their useful suggestions and contributions.

\clearpage
\section*{Appendix 1}

Apollo is an open autonomous driving platform. It is a high-performance flexible architecture that supports fully autonomous driving capabilities. For business contact, please visit \url{http://apollo.auto}.

A list of contributors for Apollo includes, but is not limited to, \url{https://github.com/ApolloAuto/apollo-11/graphs/contributors}.

\section*{Appendix 2: constrained smoothing spline and quadratic programming}\label{sec:appendix}
The spline QP path and speed optimizer uses the same constrained smoothing spline framework. In this section, we formalize the smoothing spline problem within the quadratic programming framework and linearized constraints. Optimization with a quadratic convex objective and linearized constraints can be rapidly and stably solved.

Under the quadratic programming setting, we optimize a third-order smooth function $f(x)$ given a quadratic objective functional and linear constraints. A smoothing spline function $f(x)$ on domain $(a, b), a = x_0, b = x_n$ is represented as a linear combination of piecewise polynomial bases. Denote $x_0 = a, x_1, x_2, ..., x_n = b$ as the knots. Then, $f(x)$ is given by the following:
\begin{equation} \label{eqn:functionaloptimization} f(x) = \begin{cases}  f_0(x - x_0)& x \in [x_0, x_1) \\
		                    f_1(x - x_1) & x  \in [x_1, x_2) \\
		                     & ... \\
		                    f_{n - 1} (x - x_{n - 1}) & x \in [x_{n - 1}, x_{n}] 
		                          \end{cases} \end{equation}
where $f_{k}(x) = p_{k0} + p_{k1} x + ... p_{km} x^{m}$ is a polynomial function. The coefficient is defined as $\mathbf p_k = (p_{k0}, p_{k1} ,..., p_{km})^T$ for $f_k$. The smoothing spline parameter vector is defined as $\mathbf p = (\mathbf p_0^T, \mathbf p_1^T, ..., \mathbf p_{n - 1}^T)^T$.
Mathematically, the spline solver attempts to find a function $\hat f$ that 
optimizes the linear combination of functionals on the reproducing kernel Hilbert space (RKHS) domain. The domain space is described as
 
 \begin{align*}
 &\Omega = \{f: [a, b] \rightarrow \mathbf R | f, f^{(1)}, f^{(2)}, f^{(3)}  \\
 & \text{is abs. conti. and} \int_{a}^b (f^{(m)})^2 dx < \infty, m = 0, 1, 2, 3 \} \\
 \end{align*}

The spline QP problem is formed with an objective functional, linearized constraints and quadratic programming solver. It is described as follows:
\begin{equation}
\arg \min_{f \in \Omega} \mathbf P(f) = \sum_{i = 0}^{3}w_i \mathbf P_i (f)
\label{eqn:spline}
\end{equation}
with $\mathbf L(f(x)) <= \mathbf 0$.
The objective functional $\mathbf P(f)$ is a linear combination of four functionals, $\mathbf P_i, i = 0, 1, 2, 3$ on $\Omega$. Specifically,
\begin{align*}
& \mathbf  P_0 (f)  =  \int_{a}^b  (f(x) - g(x))^2 dx,\\
& \mathbf  P_1 (f)  =  \int_{a}^b  (f^{(1)}(x))^2 dx, \\
& \mathbf  P_2 (f)  =  \int_{a}^b  (f^{(2)}(x))^2 dx,\\
& \mathbf  P_3 (f)  =  \int_{a}^b  (f^{(3)}(x))^2 dx, \\
\end{align*}
where g is a pre-specified guideline function in $\Omega$.
The first one represents the distance to the guidance line $g$, whereas the remaining three describe the smoothness of $f$. The constraint functional $\mathbf L(f(x))$ includes the following types based on the problem setup:
\begin{enumerate}
\item  Boundary constraint, i.e., $ L \leq f^{(m)}(x) \leq U, m = 0, 1, 2, 3$ given evaluated location $x$. For example, $m = 0, x = 2$ represents the smooth function value; then, $f(2)$ shall be bounded between lower bound $l$ and upper bound $L$. Specifically, boundary constraint with heading has the form $L \leq f(x) + c f'(x) \leq U $.
\item Smoothness constraint up to the $m$-th derivative. Since $f$ is piecewise polynomials, it requires polynomials are joint at spline knots $x_k, k = 1, 2, ..., n-1$ with up to $m$-th-order derivative matching.
\item Monotonicity constraint. The constraint is specifically designed for the speed smoother, which guarantees that the path is monotonic at a sequence of specified points.
\end{enumerate}

\begin{theorem}
Functional $\mathbf P$ defined in \ref{eqn:spline} has a quadratic form, and functional $L$ has a linear form with respect to parameter $\mathbf p$.
\end{theorem}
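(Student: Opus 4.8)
The plan is to exploit the single structural fact that the spline $f$ and all of its derivatives depend \emph{linearly} on the parameter vector $\mathbf p$, and then observe that squaring and integrating a quantity that is linear in $\mathbf p$ produces a quadratic form in $\mathbf p$, whereas leaving it un-squared produces a linear form. Both claims of the theorem then follow from this one observation applied term by term.

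First I would fix the piecewise representation. On the $k$-th knot interval $[x_k, x_{k+1})$, write the local monomial vector $\mathbf b(x) = (1, (x - x_k), \ldots, (x - x_k)^m)^T$, so that $f(x) = \mathbf b(x)^T \mathbf p_k$. Because differentiation acts linearly on monomials, the $j$-th derivative satisfies $f^{(j)}(x) = \mathbf b^{(j)}(x)^T \mathbf p_k$, where $\mathbf b^{(j)}$ denotes the entrywise $j$-th derivative of $\mathbf b$. Embedding the local block $\mathbf p_k$ into the global vector $\mathbf p$ by a selector that is zero outside the active block, there is for each $x$ and each order $j$ a vector $\mathbf c_j(x)$ with $f^{(j)}(x) = \mathbf c_j(x)^T \mathbf p$. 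This is the key step and the only one that needs care: the bookkeeping of which block is active for a given $x$ and the fact that the map $\mathbf p \mapsto f^{(j)}(x)$ is linear for every $x$.

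Next I would treat the objective. For the smoothness terms $\mathbf P_i(f) = \int_a^b (f^{(i)})^2\, dx$, $i = 1, 2, 3$, substituting $f^{(i)}(x) = \mathbf c_i(x)^T \mathbf p$ gives $(f^{(i)}(x))^2 = \mathbf p^T \mathbf c_i(x)\mathbf c_i(x)^T \mathbf p$; integrating over $[a,b]$ (a finite sum of polynomial integrals over the knot intervals, hence finite) yields $\mathbf P_i(f) = \mathbf p^T \mathbf H_i \mathbf p$ with $\mathbf H_i = \int_a^b \mathbf c_i(x)\mathbf c_i(x)^T\, dx$, a quadratic form. For the guidance term $\mathbf P_0(f) = \int_a^b (f - g)^2\, dx$, the same substitution produces $\mathbf p^T \mathbf H_0 \mathbf p - 2\,\mathbf q^T \mathbf p + \text{const}$ with $\mathbf q = \int_a^b g(x)\,\mathbf c_0(x)\, dx$, which is still quadratic in $\mathbf p$ (the additive constant is independent of $\mathbf p$ and does not affect the $\arg\min$). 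Summing, $\mathbf P = \sum_i w_i \mathbf P_i$ is a linear combination of quadratics and hence quadratic, with Hessian $\mathbf H = \sum_i w_i \mathbf H_i$.

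Finally I would dispatch the constraints. Each boundary constraint $L \le f^{(m)}(x) \le U$ becomes $L \le \mathbf c_m(x)^T \mathbf p \le U$, and the heading variant $L \le f(x) + c f'(x) \le U$ becomes $L \le (\mathbf c_0(x) + c\,\mathbf c_1(x))^T \mathbf p \le U$ — both affine in $\mathbf p$. The smoothness (continuity) constraints match one-sided derivatives at a knot, $f_{k-1}^{(j)}(x_k - x_{k-1}) = f_k^{(j)}(0)$, a difference of two quantities linear in $\mathbf p$ and hence a linear equality. The monotonicity constraint $S(t_i) \le S(t_{i+1})$ is $(\mathbf c_0(t_{i+1}) - \mathbf c_0(t_i))^T \mathbf p \ge 0$, again linear. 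Collecting these into $\mathbf L(f(x)) \le \mathbf 0$ gives a linear system in $\mathbf p$, establishing both claims. I expect the only genuine obstacle to be the first step — defining the selector vectors $\mathbf c_j(x)$ cleanly so that the piecewise structure and the global-to-local index map are unambiguous; everything after that is routine substitution and integration of polynomials.
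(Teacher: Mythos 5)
Your proof is correct and, for the objective $\mathbf P$, follows essentially the same route as the paper: on each knot interval $[x_k, x_{k+1})$ the derivative $f^{(i)}$ is the inner product of a monomial-derivative vector (the paper's $\mathbf x_{(i)}$, your $\mathbf c_i(x)$ up to the block-selector embedding) with the local coefficient block $\mathbf p_k$, so squaring and integrating term by term yields the block-diagonal quadratic form $\mathbf p^T \mathbf X_{(i)} \mathbf p$, with the guidance term $\mathbf P_0$ contributing an additional linear term $-2\,\mathbf p^T\mathbf c$ and a $\mathbf p$-independent constant, exactly as in the paper. The one substantive difference is coverage: the paper's proof stops after establishing that $\mathbf P$ is quadratic and never actually verifies the second claim of the theorem, whereas you explicitly check that the boundary constraints, the heading variant $L \leq f(x) + c f'(x) \leq U$, the knot-continuity (derivative-matching) constraints, and the monotonicity constraints are all affine in $\mathbf p$; that portion of your argument is routine substitution, but it supplies a step the paper leaves entirely implicit, so your write-up is strictly more complete than the published proof.
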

\begin{proof}
For simplicity, denote variable vector $\mathbf x_{(i)}, i = 0, 1, 2, 3$ as the $i$-th-order derivative coefficient of $f$ with respect to parameter $\mathbf p$ at $x \in [x_k, x_{k + 1}), k = 0, 1, ..., n - 1$. That is,

\begin{align*}
\mathbf x_{(0)}  =& (1, \tilde x, \tilde x^{2}, ..., \tilde x^{m})^T, \\
\mathbf x_{(1)}  =& (0, 1, 2 \tilde x, ..., \tilde m x^{m - 1})^T, \\
\mathbf x_{(2)}  =& (0, 0, 2, ..., m (m - 1) \tilde x^{m - 2})^T, \\
\mathbf x_{(3)}  =& (0, 0, 0, ..., m(m - 1)(m - 2)\tilde x^{m - 3})^T,\\
\end{align*}
where $\tilde x = x - x_k, ~ x \in [x_k, x_{k + 1}), k = 0, 1, ..., n - 1.$
In (\ref{eqn:functionaloptimization}), each $\mathbf P_i, i = 1, 2, 3$ can be represented as a summation of integrations:
\begin{align*}
&\mathbf P_i(f) = \sum_{k = 0}^{n - 1} \int_{x_{k}}^{x_{k + 1}} (f^{(i)})^2 dx \\
&= \sum_{k = 0}^{n - 1} \int_{x_{k}}^{x_{k + 1}} \mathbf p_k^T \mathbf x_{(i)}\mathbf x^T_{(i)} \mathbf p_k dx \\
&= \mathbf p^T \mathbf X_{(i)} \mathbf p
\end{align*}
where $X_{(i)}$ is defined as the block diagonal matrix $Diag (\int_{x_0}^{x_{1}} \mathbf x_{(i)} \mathbf x^T_{(i)} dx, ..., \int_{x_k}^{x_{k + 1}} \mathbf x_{(i)} \mathbf x^T_{(i)} dx) $.
Similarly, for $\mathbf P_0(f)$,
\begin{align*}
&\mathbf P_0(f)  = \sum_{k = 0}^{n - 1} \int_{x_{k}}^{x_{k + 1}} (f(x) - g(x))^2 dx = \\
&\sum_{k = 0}^{n - 1} \int_{x_{k}}^{x_{k + 1}} (\mathbf p_k^T \mathbf x_{(0)}\mathbf x^T_{(i)} \mathbf p_k - 2 \mathbf p_k^T \mathbf x_{(0)} g(x) + (g(x))^2) dx  \\
& = \mathbf p^T \mathbf X_{(0)} \mathbf p - 2 \mathbf p^T \mathbf c + const.
\end{align*}
where  $X_{(0)}$ is defined as the diagonal block matrix $Diag (\int_{x_0}^{x_{1}} \mathbf x_{(0)} \mathbf x^T_{(0)} dx, ..., \int_{x_k}^{x_{k + 1}} \mathbf x_{(0)} \mathbf x^T_{(0)} dx) $ and $\mathbf c = \int_{x_k}^{x_{k + 1}} \mathbf x_{(0)} g(x) dx$.

Thus, the objective function $\mathbf P(f) $ has a quadratic convex form with respect to spline parameter vector $\mathbf p$.
\end{proof}

\bibliographystyle{IEEEtran}
\bibliography{IEEEabrv,reference}

\end{document}